\let\proof\@undefined
\let\endproof\@undefined
\newtheorem{theorem}{Theorem}[section]
\newtheorem{proposition}[theorem]{Proposition}
\newtheorem{definition}[theorem]{Definition}
\newtheorem{remark}[theorem]{Remark}
\newtheorem{problem}[theorem]{Problem}
\newcommand{\CC}{\mathcal{C}}
\newcommand{\CF}{\mathcal{F}}
\newcommand{\CG}{\mathcal{G}}
\newcommand{\CL}{\mathcal{L}}
\newcommand{\CQ}{\mathcal{Q}}
\newcommand{\CR}{\mathcal{R}}
\newcommand{\CS}{\mathcal{S}}
\newcommand{\BFA}{\mathbf{A}}
\newcommand{\BFB}{\mathbf{B}}
\newcommand{\BFD}{\mathbf{D}}
\newcommand{\BFP}{\mathbf{P}}
\newcommand{\BFR}{\mathbf{R}}
\newcommand{\BFT}{\mathbf{T}}
\newcommand{\BBN}{\mathbb{N}}
\newcommand{\BBR}{\mathbb{R}}
\newcommand{\BBT}{\mathbb{T}}
\newcommand{\andltl}{\wedge}
\newcommand{\Next}{\mathbf{X}}
\newcommand{\Always}{\mathbf{G}}
\newcommand{\Event}{\mathbf{F}}
\newcommand{\Until}{\mathcal{U}}
\newcommand{\Implies}{\Rightarrow}
\newcommand{\Not}{\lnot}
\newcommand{\prop}{\alpha}
\newcommand{\opt}{\pi}
\newcommand{\optrun}{\textsc{Optimal-Run}\ }
\newcommand{\multioptrun}{\textsc{Multi-Robot-Optimal-Run}\ }
\newcommand{\constR}{\textsc{Obtain-Region-Automaton}\ }
\newcommand{\ie}{{\it i.e.},\;}
\newcommand{\eg}{{\it e.g.},\;}
\newcommand\oprocendsymbol{\hbox{$\square$}}
\newcommand\oprocend{\relax\ifmmode\else\unskip\hfill\fi\oprocendsymbol}
\title{\LARGE \bf Optimal Multi-Robot Path Planning with Temporal Logic Constraints}
\author{Alphan Ulusoy$^\dagger$%
		\quad Stephen L. Smith$^\star$%
		\quad Xu Chu Ding$^\dagger$%
		\quad Calin Belta$^\dagger$%
		\quad Daniela Rus$^\ddagger$%
		\thanks{This work was supported in part by ONR-MURI N00014-09-1051, ARO W911NF-09-1-0088, AFOSR YIP FA9550-09-1-020, and NSF CNS-0834260.}%
		\thanks{$^\dagger$ Hybrid and Networked Systems Laboratory, Boston University, Boston, MA 02215 (\ulusoy, \ding, \belta)}%
		\thanks{$^\star$ Dept. of Electrical and Computer Engineering, University of Waterloo, Waterloo ON, N2L 3G1 Canada (\smith)}%
		\thanks{$^\ddagger$ Computer Science and Artificial Intelligence Laboratory, Massachusetts Institute of Technology, Cambridge, MA 02139 (\rus)}}%
\begin{document}

\maketitle
\thispagestyle{empty}
\pagestyle{empty}
\begin{abstract}
In this paper we present a method for automatically planning optimal paths for a group of robots that satisfy a common high level mission specification. Each robot's motion in the environment is modeled as a weighted transition system. The mission is given as a Linear Temporal Logic formula. In addition, an optimizing proposition must repeatedly be satisfied. The goal is to minimize the maximum time between satisfying instances of the optimizing proposition. Our method is guaranteed to compute an optimal set of robot paths. We utilize a timed automaton representation in order to capture the relative position of the robots in the environment. We then obtain a bisimulation of this timed automaton as a finite transition system that captures the joint behavior of the robots and apply our earlier algorithm for the single robot case to optimize the group motion. We present a simulation of a persistent monitoring task in a road network environment.
\end{abstract}
\section{Introduction}

Recently there has been an increased interest in using temporal logics to specify mission plans for robots~\cite{Antoniotti95,Loizou04, CB-VI-GJP:04,HKG-GEF-GJP:09,TW-UT-RMM:10}. Temporal logics are appealing because they provide a formal high level language in which to describe a complex mission. In addition, tools from model checking~\cite{VW86,Holzmann97} can be used to generate a robot path satisfying the specification, if such a path exists. However, frequently there are multiple robot paths that satisfy a given specification. In this case, one would like to choose the \emph{optimal} path according to a cost function. The current tools from model checking do not provide a method for doing this. In our previous work~\cite{SLS-JT-CB-DR:10b} we considered Linear Temporal Logic (LTL) specifications, and a particular form of cost function, and provided a method for computing optimal robot paths for one robot. In this paper we extend this result to multiple robots.

For simplicity of presentation, we assume that each robot moves among the vertices of an environment modeled as a graph. However, by using feedback controllers for facet reachability and invariance in polytopes \cite{HS04,Belta-TAC06} the method developed in this paper can be easily applied for motion planning and control of robots with ``realistic'' continuous dynamics (\eg unicycle) traversing an environment partitioned using popular partitioning schemes such as triangulations and rectangular partitions.

The main difficulty in moving from a single robot to multiple robots is in synchronizing the motion of the robots, or in allowing the robots to move asynchronously. In~\cite{MK-CB:08}, the authors propose a method for decentralized motion of multiple robots subject to LTL specifications. In their approach, the robots take transitions (\ie travel along edges in the graph) synchronously. Once every robot has completed a transition, the robots can synchronously make the next transition. While such an approach is effective for satisfying the LTL formula, it does not lend itself to optimizing the robot motion, since time is spent ``waiting'' for other robots. In~\cite{Quottrup04}, the authors take a different approach, representing the motion of the robots in the environment as a timed automaton. Each robot then has a continuous clock variable that describes its progress along a transition (\ie a robot's position along an edge between two vertices). The authors then look at satisfying specifications given in computational tree logic (CTL). In this paper, we utilize a similar timed-automaton representation. However, we consider LTL specifications, for which the control synthesis problem is fundamentally different. In addition, rather than just satisfying the formulas, we optimize the motion of the robots.

In terms of optimizing paths, the most closely related work has been on the vehicle routing problem (VRP)~\cite{PT-DV:01}. Recent results~\cite{SK-EF:08b,SK-EF:08} present extensions of vehicle routing problems to more general classes of temporal constraints. In~\cite{SK-EF:08}, the authors consider vehicle routing with metric temporal logic specifications. The goal is to minimize a cost function of the vehicle paths (such as total distance traveled). The authors present a method for computing an optimal set of paths by converting the problem to a mixed integer linear program (MILP). While the approach is computationally intensive, it has been used to solve problems of real-world significance. However, their method does not apply to the persistent monitoring and data gathering applications that are of interest in this paper. In particular, it does not allow for specifications of the form ``always eventually,'' which appear when specifying that a robot should repeatedly perform a task. In this paper we take an entirely different approach to optimizing robot motion, resulting in an optimization problem on a graph, rather than a MILP.

The contribution of this paper is to present a method for generating optimal paths for a group of robots satisfying general LTL formulas. We focus on minimizing a cost function that captures the maximum time between satisfying instances of an \emph{optimizing proposition}. The cost is motivated by problems in persistent monitoring and in pickup and delivery problems. Our solution relies on describing the motion of the group of robots in the environment as a timed automaton. This description allows us to represent the relative position between robots. Such information is necessary for optimizing the robot motion. We provide a bisimulation \cite{Milner89} of the infinite-dimensional timed automaton to a finite dimensional transition system. From this point we are able to apply our previous results~\cite{SLS-JT-CB-DR:10b} to compute an optimal run. This run then maps to a path for each robot. We provide simulation results for robots in a road network environment. The main hurdle in our approach is the computational complexity. We discuss ways in which this can be reduced, and show that fairly complex problems can be solved under this framework.

The organization of the paper is as follows. In Section~\ref{sec:preliminaries}, we give some preliminaries. In Section~\ref{sec:problem}, we formally state the motion planning problem for a team of robots, and in Section~\ref{sec:solution} we present our solution. In Section~\ref{sec:simulations} we present an experimental case study for a team of robots performing persistent data gathering missions in a road network environment. Finally, in Section~\ref{sec:conclusions}, we discuss some promising future directions.
\section{Preliminaries 
\label{sec:preliminaries}}

\subsection{Transition Systems and LTL}
\label{sec:sub:TSLTL}

\begin{definition}[\bf Transition Systems]
\label{def:TS}
A (weighted) transition system (TS) is a tuple $\BFT := (\CQ_T, q_T^0, \to_T, \Pi, \CL_T, w_T)$, where
	\begin{enumerate}
	\item $\CQ_T$ is a finite set of states, 
	\item $q_T^0 \in \CQ_T$ is the initial state,
	\item $\to_T \subseteq \CQ_T \times \CQ_T$ is the transition relation,
	\item $\Pi$ is a finite set of atomic propositions (observations),
	\item $\CL_T:\CQ_T\to 2^{\Pi}$ is a map giving the set of all atomic propositions satisfied in a state, and
	\item $w_T: \rightarrow_{T}\to \mathbb R^{+}$ is a map that assigns a positive weight to each transition.
	\end{enumerate}
\end{definition}

We define a run of $\BFT$ as an infinite sequence of states $r_T = q^0q^1\ldots$ such that $q^0 = q^{0}_{T}$, $q^k \in \CQ_T$ and $(q^k,q^{k+1}) \in \to_T$ for all $k \geq 1$. A run generates an infinite word $\omega_T = \CL(q^0)\CL(q^1)\ldots$ where $\CL(q^k)$ is the set of atomic propositions satisfied at state $q^k$.  
\begin{definition}[Formula of LTL]
  An LTL formula $\phi$ over the atomic propositions $\Pi$ is defined
  inductively as follows:
$$\phi ::= \top \mid \prop \mid \phi \lor
   \phi \mid \phi \andltl \phi \mid \lnot\,\phi \mid \Next\,\phi \mid \phi\,\Until\,\phi$$
  where $\top$ is a predicate true in each state of a system, $\prop \in
  \Pi$ is an atomic proposition, $\neg$ (negation), $\vee$
  (disjunction) and $\wedge$ (conjunction) are standard Boolean connectives, and $\Next$ and
  $\Until$ are temporal operators.
\end{definition}

LTL formulas are interpreted over infinite words (generated by the
transition system $\BFT$ from Def.~\ref{def:TS}).
Informally, $\Next\,\prop$ states that at the next state of a word,
proposition $\prop$ is true; and $\prop_1\,\Until\,\prop_2$ states that there is a future
moment when proposition $\prop_2$ is true, and proposition $\prop_1$
is true at least until $\prop_2$ is true.  From these temporal
operators we can construct two other temporal operators: Eventually
(i.e., future), $\Event$ defined as $\Event\,\phi := \top\,\Until\,
\phi$, and Always (i.e., globally), $\Always$, defined as
$\Always\,\phi := \lnot\,\Event\,\lnot\,\phi$.  The formula
$\Always\,\phi$ states that $\phi$ is true at all positions of the
word; the formula $\Event\,\phi$ states that $\phi$ eventually becomes true in the word.  More expressivity can be achieved by combining the temporal and Boolean operators.  We say a run $r_{T}$ satisfies $\phi$ if and only if the word generated by $r_{T}$ satisfies $\phi$.

\begin{definition}[B\"uchi Automaton]
  A B\"uchi automaton is a tuple $\BFB :=
  (\CS,\CS_0,\Sigma,\delta,\CF)$, consisting of %
(i) a finite set of states $\CS$; %
(ii) a set of initial states $\CS_0\subseteq \CS$; %
(iii) an input alphabet $\Sigma$; %
(iv) a non-deterministic transition relation $\delta \subseteq
  \CS\times \Sigma \times \CS$; %
(v) a set of accepting (final) states $\CF\subseteq \CS$.
\end{definition}

A \emph{run} of the B\"uchi automaton over an input word
$\omega=\omega^0\omega^1\ldots$ is a sequence
$r_B=s^0s^1\ldots$, such that $s^0 \in \CS_0$, and
$(s^k,\omega^k,s^{k+1}) \in \delta$, for all $k\geq 1$. A B\"{u}chi automaton accepts a word over $\Sigma$ if at least one of the corresponding runs 
intersects with $\CF_B$ infinitely many times.  For any LTL formula $\phi$ over $\Pi$, one can construct a B\"{u}chi automaton with input alphabet $\Sigma\subseteq 2^{\Pi}$ accepting all and only words over $\Pi$ that satisfy $\phi$. We refer readers to \cite{gastin2001fast} and references therein for efficient algorithms and freely downloadable implementations to translate a LTL formula over $\Pi$ to a corresponding B\"{u}chi automaton.   %

\subsection{Timed Automata}
\label{sec:sub:prelim-time-automata}
A {\it clock} is a real-valued variable that increases at a rate of one as time progresses. Clocks may be valuated, or reset to zero. Let $\CC$ denote a set of clocks. A {\it clock valuation} of some clock $x \in \CC$, denoted as $v(x)$, is a mapping from $\CC$ to $\BBR_{\geq 0}$ that assigns a real value to each clock. A {\it clock constraint} $g$ over a set of clocks $\CC$ is formed according to the grammar
  \[
  g ::= x < c \;\big|\; x\leq c \;\big|\; x > c \;\big|\; x\geq c
  \;\big|\; g \land g,
  \]
  where $c\in \BBN$ is a constant and $x\in \CC$ is a clock. We let $\CG$ denote the set of all clock constraints over $\CC$. A {\it clock valuation} $v(x)$ of some clock $x$ satisfies a clock constraint $g$ at some time iff $g$ evaluates to true for $v(x)$. 
  
\begin{definition}[\bf Timed Automata]
\label{def:TA}
  A timed automaton is a tuple $\BFA := (\CQ_A,q_A^0,\CC_A,\CG_A,\to_A,\Pi,\CL_A)$ where
	\begin{enumerate}
	\item $\CQ_A$ is a finite set of states,
	\item $q_A^0 \in \CQ_A$ is an initial state,
	\item $\CC_A$ is a finite set of clocks,
	\item $\CG_A$ is a finite set of clock constraints over $\CC_A$,
	\item $\to_A \subseteq \CQ_A \times \CG_A \times 2^{\CC_A} \times \CQ_A$ is the transition relation. A transition is a tuple $(q,g,c,q')$ where $q$ is the source state, $q'$ is the destination state, $g$ is the clock constraint that enables the transition, and $c \subseteq \CC_A$ is the clock-resets, which is the set of clocks to be reset right after the transition.
	\item $\Pi$ is a finite set of atomic propositions, and
	\item ${\CL_A}$ is a map assigning a subset of $\Pi$ to each transition of $\to_A$.
	\end{enumerate}
\end{definition}

The semantics of the timed automaton can be understood as follows: starting from the initial state $q_A^0$, the values of all clocks increase at rate one, and the system remains at this state until a clock constraint corresponding to an outgoing transition is satisfied. When this happens, the transition is immediately taken and the clocks in the clock-resets are reset. The timed automaton from Def.~\ref{def:TA} can be seen as a particular case of the timed automaton defined in \cite{alur1994theory}, which also allows for clock invariants associated with states.

A timed automaton, as defined in Def.~\ref{def:TA}, has a finite set of clock regions $\CR_A$, which is the set of equivalence classes of clock valuations induced by its clock constraints $\CG_A$. Intuitively, a clock region $r \in \CR_A$ is a subset of the infinite set of all clock valuations of $\CC_A$, in which all clock valuations are equivalent in the sense that the future behavior of the system is the same. In \cite{alur1994theory}, it has been shown that a clock region can be either a corner point (\eg (0,1)), an open line-segment (\eg $0\leq x_{1}=x_{2}\leq 1$), or an open region (\eg $0\leq x_{1} \leq x_{2}\leq 1$). The clock regions $\CR_A$ of a timed automaton $\BFA$ induce an equivalence relation $\sim_A$ over its state space, and a simulation quotient, which we refer to as the region automaton $\BFR= \BFA / \sim_A$.  The region automaton $\BFR$ induced by this equivalence relation is a bisimulation quotient. To define $\BFR$, we define a clock region $r''$ to be the \emph{time-successor} of a clock region $r$ if and only if there is a $t>0$ such that all possible clock valuations in $r$ are in clock region $r''$ after time $t$.

\begin{definition}[\bf Region Automata]
\label{def:RA}
The region automaton $\BFR$ of a timed automaton $\BFA$ (Def.~\ref{def:TA}) is a tuple $\BFR := (\CQ_R,q_R^0,\to_R)$, where
 	\begin{enumerate}
 	\item $\CQ_R$ is the set of states of the form $\{q,r\}$ such that $q \in \CQ_A$ and $r \in \CR_A$,
 	\item $q_R^0$ is the initial state of the form $\{q_A^0,r^{0}\}$ such that $q_A^0$ is the initial state of $\BFA$ and all clock valuations of $r^{0}$ are zero, \ie $x_i = 0 \ \forall \ x_i \in r^{0}$,
 	\item $\to_R$ is the transition relation such that there is a transition from $\{q,r\}$ to $\{q',r'\}$ if and only if there is a transition from $q$ to $q'$ in $\BFA$ and a clock constraint $g$ in $\CG_A$ and a clock region $r''$ such that:
 		\begin {enumerate}
 		\item $r''$ is a time-successor of $r$,
 		\item $r''$ satisfies the clock constraint $g$, and
 		\item $r''$ goes to $r'$ when corresponding clocks are reset once $g$ is satisfied and the transition is made.
 		\end{enumerate}
 	\end{enumerate}
\end{definition}

\section{Problem Formulation and Approach \label{sec:problem}}
Let
\begin{equation}\label{eqn:graph}
\mathcal{E}=(V,\rightarrow_\mathcal{E})
\end{equation}
be a graph of the environment, where $V$ is the set of vertices
and $\rightarrow_{\mathcal{E}}\subseteq V\times V$ is a relation
modeling the set of edges. In practice, $\mathcal{E}$ can be the quotient
graph of a partitioned environment, where $V$ is a set of labels
for the regions in the partition, and $\rightarrow_{\mathcal{E}}$ is the
corresponding adjacency relation.
For example, $V$ can be a set of labels for the roads, intersections, and buildings in an urban-like environment and
$\rightarrow_\mathcal{E}$ can show how these are connected (see Fig.~\ref{fig:rule_ts}). 

Consider a team of $m$ robots moving in an environment modeled by $\mathcal E$. The motion capabilities of robot $i=\{1,\ldots,m\}$ can be represented by a transition system (see Def. \ref{def:TS})
\begin{equation}
\BFT_i = (\CQ_i,q_i^0,\to_i,\Pi,\CL_i,w_i),
\end{equation}
where $\CQ_{i}\subseteq V$; $q_i^0$ is the initial vertex of robot $i$; $\to_i\subseteq \rightarrow_{\mathcal E}$ is a relation modeling the capability of robot $i$ to move among the vertices; $\Pi$ is a set of propositions assigned to the environment, which are assigned by $\CL_i$ to robot $i$; $w_i(q,q')$ captures the time for robot $i$ to go from vertex $q$ to $q'$, and we assume that $w_{i}(q,q')$ is always an integer. In this robotic model, robot $i$ travels along the edges of $\BFT_{i}$, and spends zero time on the vertices. Note that we allow the assignment of propositions to differ for different robots to capture the possibly different capabilities of the robots to satisfy propositions in the environment. Also, in the definition of transition systems, each transition is deterministic, so any run on $\BFT_{i}$ can always be followed by robot $i$.

We assume that there is an atomic proposition
$\opt \in \Pi$, called the \emph{optimizing proposition}. We consider
LTL formulas of the form
\begin{equation}
\label{eq:general_formula}
\phi:=\varphi \land \Always\,\Event\, \opt,
\end{equation}
where $\varphi$ can be any LTL formula over $\Pi$, and
$\Always\,\Event\, \opt$ specifies that proposition $\opt$ must be
satisfied infinitely often.  In a persistent data gathering task, $\opt$ can be assigned to regions where new data is gathered, while $\varphi$ could be used to specify rules (such as traffic rules) that must be obeyed at all times during the task.

We assume that each run $r_i = q_i^0q_i^1\ldots$ of a $\BFT_i$ (robot $i$) starts at $t=0$ and generates a word $\omega_i = \omega_i^0\omega_i^1\ldots$ and an infinite sequence of time instances $\BBT_i := t_i^0t_i^1\ldots$ such that $\omega_i^k = \CL_i(q_i^k)$ is satisfied at $t_i^k$. In order to define the behavior of the team as a whole, we consider the sequences $\BBT_{i}$ as sets and take the union $\bigcup_{i=1}^{m} \BBT_{i}$ and 
order this set in an ascending order to obtain $\BBT := t^0t^1,\ldots$. Then, we define $\omega = \omega^0\omega^1\ldots$ to be the word generated by the team of robots where $\omega^k$ is the union of all propositions satisfied at $t^k$. Finally, we define the infinite sequence $\BBT^\opt = \BBT^\opt(1),\BBT^\opt(2),\ldots$ where $\BBT^\opt(k)$ stands for the time instance when $\opt$ is satisfied for the $k^{th}$ time by the team.  We can now formulate the problem:
\begin{problem} 
{\bf Given} a team of robots modeled as transitions systems $\BFT_{i}$ and an LTL formula $\phi$ in the form \eqref{eq:general_formula}; {\bf Synthesize} a run $r_{i}$ for each robot in the team such that the word generated by the team satisfies $\phi$ and $\mathbb T^{\opt}$ minimizes
\begin{equation}
\label{eq:cost_function}
J(\BBT^{\opt})=\limsup_{i\to+\infty}\left(\mathbb{T}^{\opt}(i+1) - \mathbb{T}^{\opt}(i)\right).
\end{equation}
\label{prob:problem}
\end{problem}
Note that a solution to Prob. \ref{prob:problem} minimizes the maximum
time between satisfying instances of $\opt$.  Since we consider LTL formulas containing $\Always\Event \pi$, this optimization problem is always well-posed. For the data gathering task previously mentioned, this translates to minimizing the maximum time in between two data gatherings.

Our solution to Problem~\ref{prob:problem} can be outlined as follows:
	\begin{enumerate}
	\item For each  transition system $\BFT_i, i=1,\ldots,m$, we obtain the dual transition system $\BFD_i$ where states and transitions are swapped and propositions are assigned to the transitions (Sec.~\ref{sec:sub:dts});
	\item For each dual transition system $\BFD_i$, we obtain a corresponding timed automaton $\BFA_i$. Each timed automaton consists of a single clock, which keeps track of the amount of time that a robot has been traveling between states in the original transition system $\BFT_i$ and we create a product timed automaton $\BFP$ as the parallel composition of $\BFA_i, i=1,\ldots,m$ timed automata (Sec.~\ref{sec:sub:ta});
	\item We obtain the region automaton $\BFR$ as the bisimulation quotient of $\BFP$ (Sec.~\ref{sec:sub:ra});
	\item We find the optimal run on $\BFR$ using the \optrun algorithm we previously developed in \cite{SLS-JT-CB-DR:10b}. We project this run back to the individual $\BFT_i,i=1,\ldots,m$ to obtain the solution to Prob. \ref{prob:problem} (Sec.~\ref{sec:sub:run}).
	\end{enumerate}

\section{Problem Solution \label{sec:solution}}
In this section, we explain each step of the solution to Prob.~\ref{prob:problem} in detail. For illustration, we use a simple example throughout this section involving two robots in an environment consisting of three vertices. We present a multi-robot scenario in a more realistic setting in Sec. \ref{sec:simulations}.

\subsection{Dual Transition Systems}
\label{sec:sub:dts}
We proceed by converting the transition system $\BFT_{i}$ for each
robot to a dual transition system $\BFD_{i}$.  The dual $\BFD$ of a
transition system $\BFT$ is obtained by swapping its states with its
transitions.  More precisely, given $\BFT = (\CQ_T, q_T^0, \to_T, \Pi,
\CL_T, w_T)$, we define $\BFD=(\CQ_D, q^{0}_{D} \to_{D}, \Pi, \mathcal
L_{D}, w_{D})$ as follows: if $(a,b)\in \to_{T}$, then $ab \in
\CQ_{D}$, and $(ab,bc)\in \to_{D}$. Intuitively, this means that the
robot can ``go from $a$ to $c$ through $b$.'' As
propositions are originally assigned to the states of $\BFT$, they are
satisfied on the transitions of $\BFD$, \ie if $(ab, bc)\in \to_{D}$,
then $\mathcal L_{D}((ab, bc))=\mathcal L_{T}(b)$. In addition,
weights assigned to transitions of $\BFT$ are now defined on states of
$\BFD$, \ie $w_D(ab) = w_T(a,b)$.  This means that in the dual
$\BFD_i$ of a $\BFT_i$ time is spent on the vertices and transitions
are instantaneous. Since the initial state
$q_{T}^{0}$ of $\BFT$ can have multiple outgoing transitions, the
initial state $q^{0}_{D}$ does not correspond to any transitions, therefore it has zero weight, but
it connects to all outgoing transitions of $q_{T}^{0}$. The duals of two simple transition systems are
shown in Fig. \ref{fig:TSandDual}.

\begin{figure}[!ht]
   \centering
   \subfloat[][]{\includegraphics[width=0.16\linewidth]{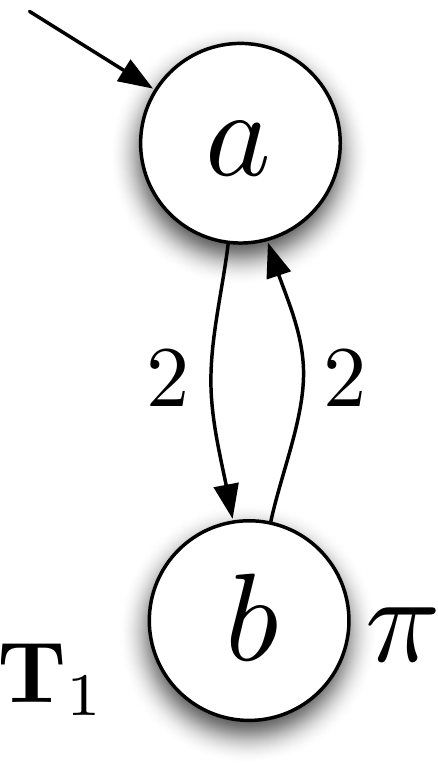} \label{fig:sub:TSa}} \hspace{30pt}
   \subfloat[][]{\includegraphics[width=0.28\linewidth]{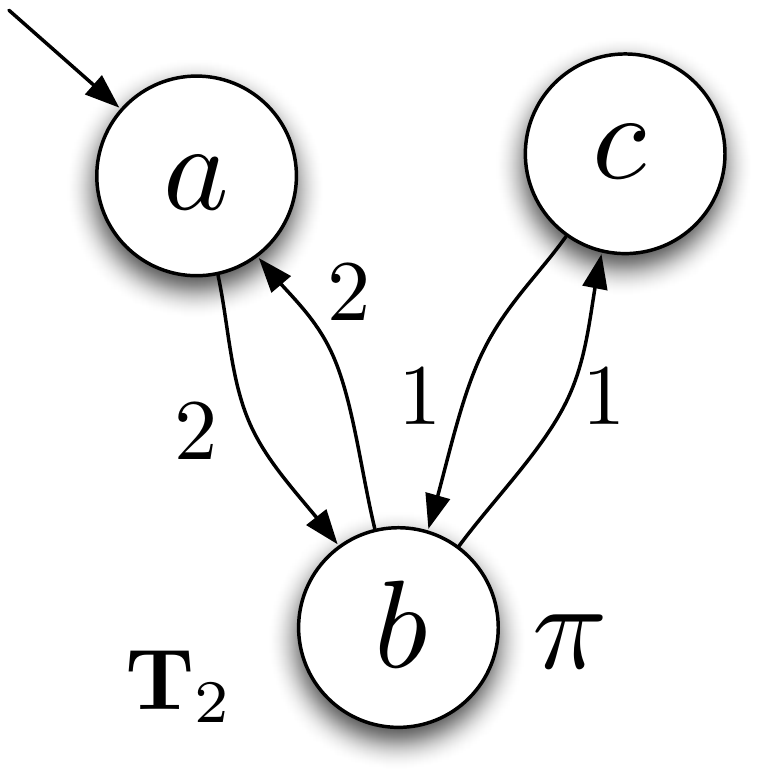} \label{fig:sub:TSb}}\\
   \subfloat[][]{\includegraphics[width=0.24\linewidth]{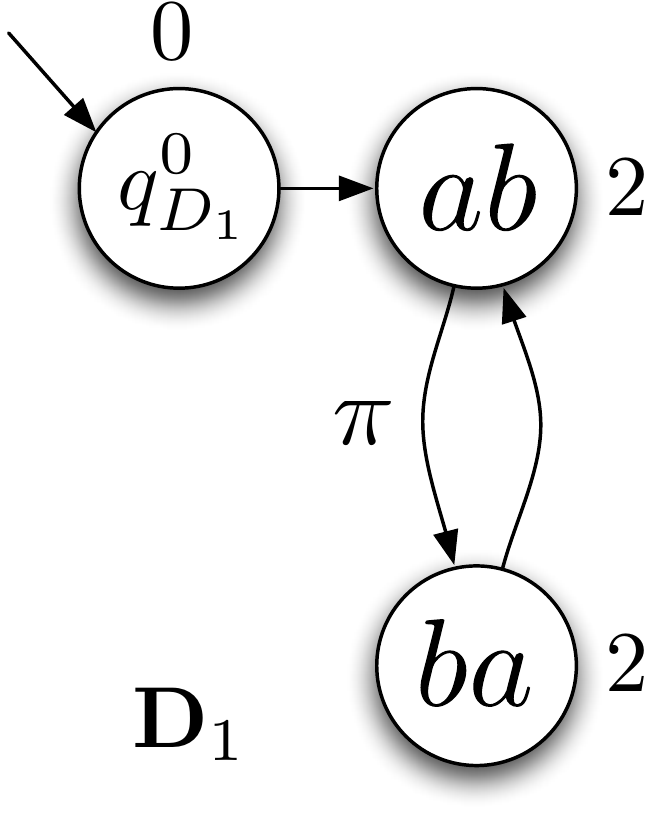} \label{fig:sub:Duala}}\hspace{3pt}
   \subfloat[][]{\includegraphics[width=0.43\linewidth]{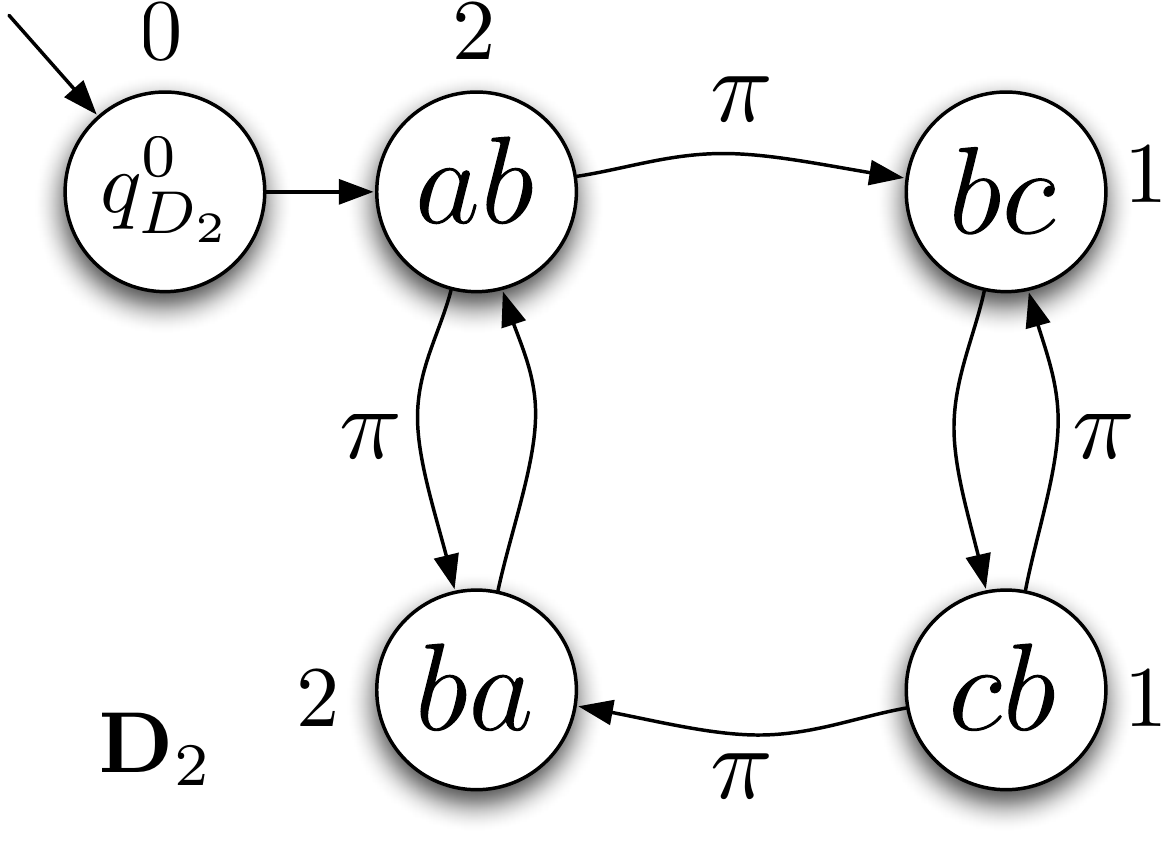} \label{fig:sub:Dualb}}
   \caption{(a) and (b) show the transition systems $\BFT_1$ and $\BFT_2$ for two robots in an environment with three vertices. The states of the transition systems correspond to vertices $\{a,b,c\}$ and the edges represent the motion capabilities of each robot. The weights of the edges represent the time needed to traverse from a state to another; (c) and (d) are the dual transition systems $\BFD_1$ and $\BFD_2$ corresponding to $\BFT_{1}$ and $\BFT_{2}$, respectively. A state labelled $ab$ means that the robot is travelling from vertex $a$ to $b$. }
\label{fig:TSandDual}
\end{figure}

\subsection{Construction of the timed automata}
\label{sec:sub:ta}

By constructing the duals of the original transition systems of individual robots, we can now fully capture the evolution of time for each robot taking transitions on $\BFT_{i}$ with a timed automaton as defined in Def. \ref{def:TA}.   We can then generate a product timed automaton capturing the time evolution of the whole team.  

To this end, for each robot, we define a clock $x_{i}$, which records how much time has passed in each state of $\BFD_{i}$.  We interpret the weights on the states of $\BFD_{i}$ as clock constraints, \ie each state $ab$ in $\BFD_{i}$ is associated with a clock constraint $v(x_{i})\geq w_{T}(a,b)$.  We set the initial value of the clock for each robot to $0$, and we let the clock constraint for the initial state of $\BFD_{i}$ to be immediately satisfied.  At each state, once the clock constraint is satisfied, it triggers an outgoing transition and clock $x_{i}$ is reset to $0$. As mentioned before in Def. \ref{def:TA}, we enforce a transition when a clock constraint is satisfied. We denote the timed automaton corresponding to $\BFD_{i}$ as $\BFA_{i}$. The timed automata corresponding to the $\BFD_i$'s in Fig. \ref{fig:TSandDual} are illustrated in Fig. \ref{fig:TAindividual}.

\begin{figure}[h]
	\centering
	\includegraphics[width=0.40\linewidth]{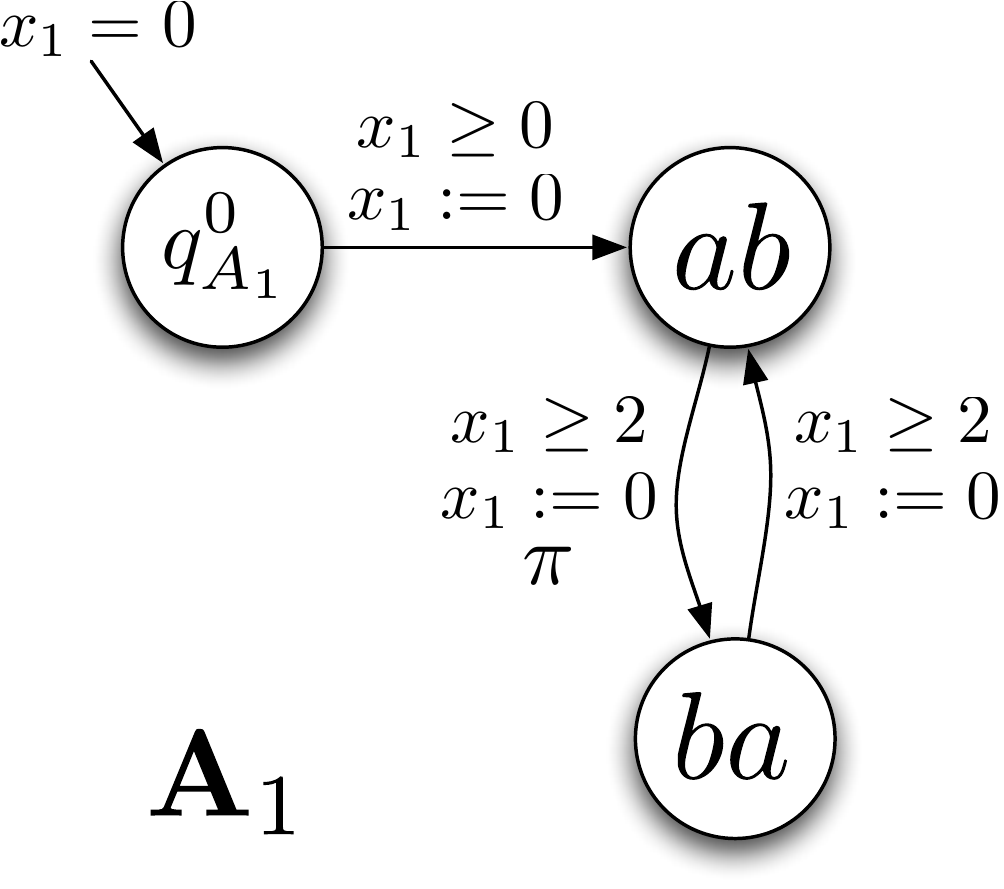} \hfill
\includegraphics[width=0.7\linewidth]{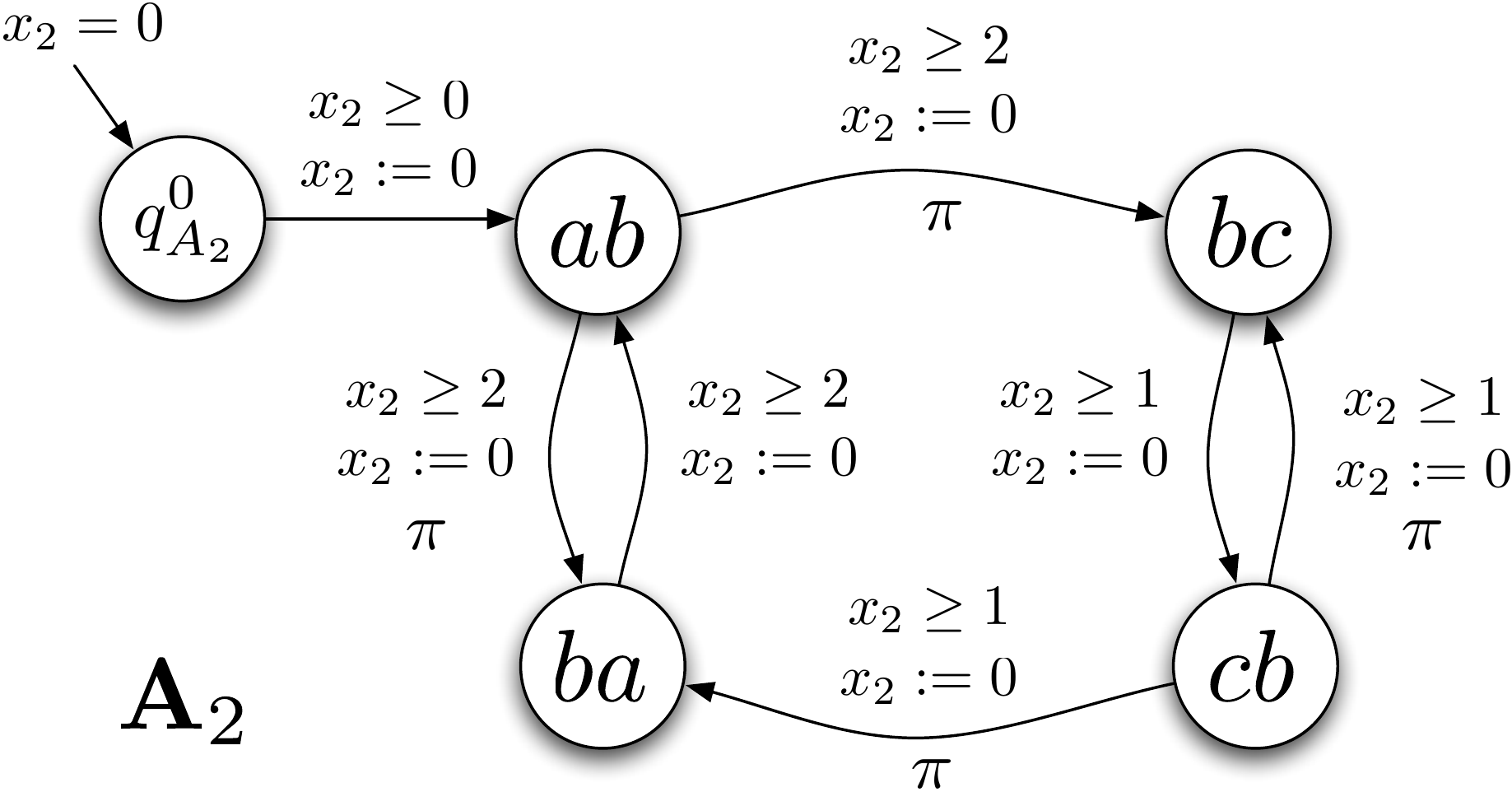}
\caption{Timed automata $\BFA_1$ and $\BFA_2$ of each robot,
  corresponding to $\BFD_{1}$ and $\BFD_{2}$ shown in
  Fig. \ref{fig:sub:Duala} and Fig. \ref{fig:sub:Dualb}, respectively.
  The equations next to each arrow represents the clock constraint and
  the clock-reset associated with each transition of the timed
  automaton.}
	\label{fig:TAindividual}
\end{figure}

We capture the joint behavior of the robots by taking the parallel composition of the individual timed automata $\BFA_i$, $i=1,\ldots,m$, and calling it the product timed automata $\BFP$. The set of states of $\BFP$ is the Cartesian product of the set of states of $\BFD_{i}$,
$i\in\{1,\ldots,m\}$.  The initial state of $\BFP$ is $(q^{0}_{D_{1}},\ldots, q^{0}_{D_{m}})$. We enable a transition from state $(q_{1}, \ldots, q_{m})$ to $(q_{1}', \ldots, q_{m}')$ if and only if, for all $i$, either $(q_{i},g_{i},c_{i},q_{i}')\in \to_{A_{i}}$, or if $(q_{i},g_{i},c_{i},q_{i}')\notin \to_{A_{i}}$ for some $i$, then $q_{i}=q_{i}'$.  We label this transition with the union of propositions satisfied by the corresponding transitions in $\to_{D_{i}}$, and similarly the clock constraints that enable this transition are the union of all clock constraints $g_{i}$ associated with the transitions that are taken and inverses of the clock constraints associated with the remaining transitions that are not taken. Moreover, the clocks are reset for all robots $i$ that transitioned to a new state $q_i'$. We require that at least one robot $i$ makes a transition to a new state for each transition of $\BFP$. Since we enforce each transition to be taken immediately when all clock constraints are satisfied, some transitions of $\BFP$ may never be taken because they are always preceded by some other transitions for all possible clock values.  Such transitions will be referred to as invalid transitions. For the example given in Fig. \ref{fig:TSandDual} and Fig. \ref{fig:TAindividual}, we show the resulting product timed automaton $\BFP = \BFA_1 \times \BFA_2$ in Fig.~\ref{fig:TA} (without invalid transitions).

\begin{figure}[h]
	\centering
	\includegraphics[width=1\linewidth]{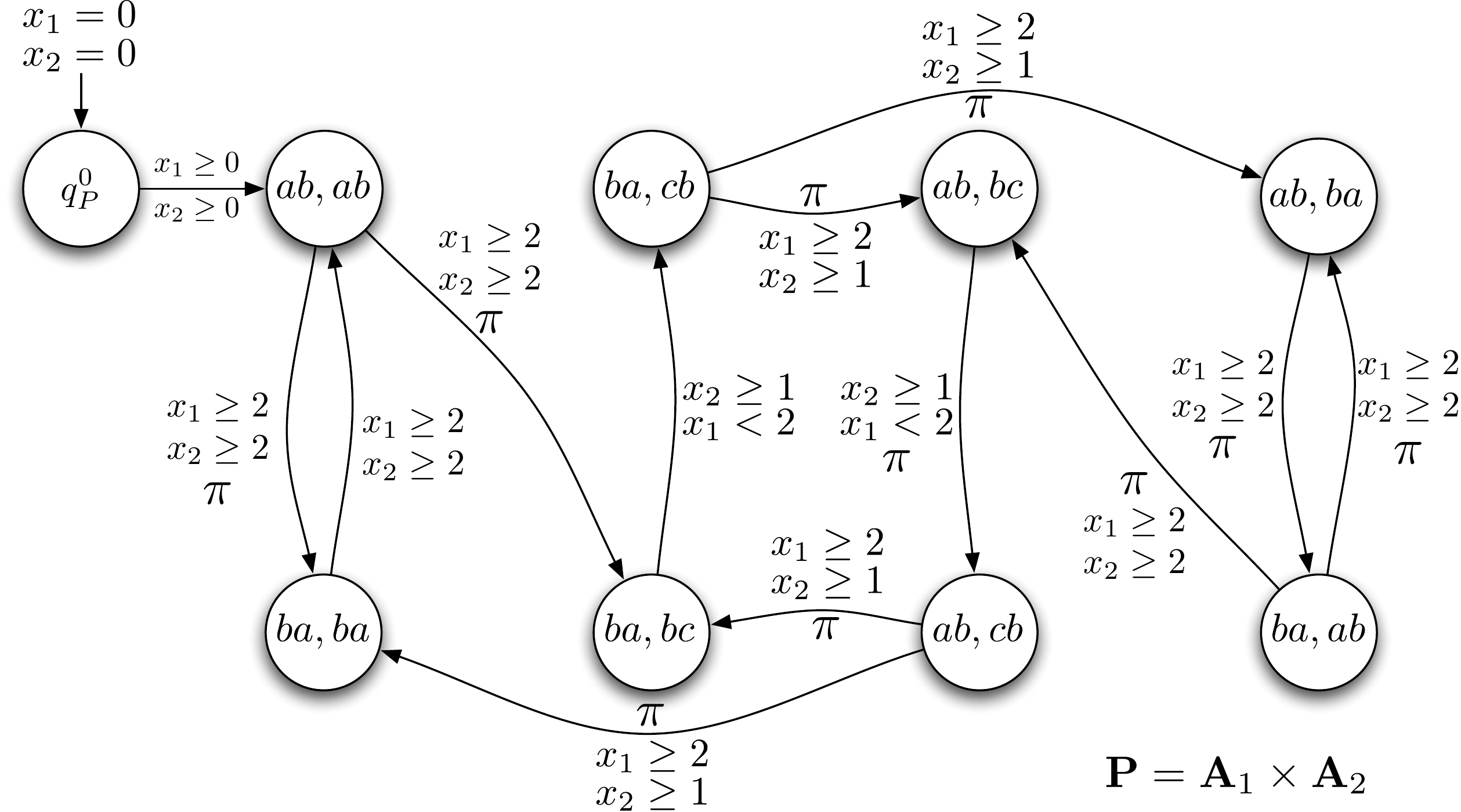}
	\caption{The product timed automaton $\BFP$ describing the motion of the two robots. The state $ab,ba$ denotes that $\BFA_1$ is in state $ab$ and $\BFA_2$ is in state $ba$.  To avoid notation clustering, we do not show the clock-resets and invalid transitions.  For example, in the transition from state $(ba,bc)$ to $(ba,cb)$, robot $2$ completes a transition, so its clock is reset, while robot $1$ does not complete a transition, the state stays the same and the clock is not reset.  The transition from $(ba,bc)$ to $(ab,bc)$ is invalid, because it can never happen before the transition from $(ba,bc)$ to $(ba,cb)$.}
	\label{fig:TA}
\end{figure}

\subsection{Construction of the Region Automaton}
\label{sec:sub:ra}
From the product timed automaton $\BFP$, we can obtain the \emph{region automaton} $\BFR$ as a bisimulation quotient of $\BFP$ (see Sec. \ref{sec:sub:prelim-time-automata}).  Note that the bisimulation quotient we obtain from $\BFP$ is a particular case of the bisimulation quotient of a general timed automaton, where the transitions are enforced when clock constraints are satisfied.  In the process of obtaining $\BFR$, all invalid transitions of $\BFP$ are automatically removed, by the definition of region automata.

We can now assign propositions and weights to $\BFR$, converting it to a transition system as defined in Def. \ref{def:TS}.  We define a function $\CL_{R}: \CQ_{R}\to 2^{\Pi}$ such that, for each transition $(\{q,r\},\{q',r'\})$, the set of propositions corresponding to the transition $(q, g, c, q')$ on $\BFP$ are assigned to the state $q'$, \ie observations defined on the transitions of $\BFP$ are carried to their destination states in $\BFR$. In the following, we take $m$ to be the number of clocks, or equivalently the number of robots, in the product timed automaton $\BFP$, and $d_{i}$ to be the largest integer constant that some clock $x_i \in \CC_P = \{x_1,\ldots,x_m\}$ is compared with.
\begin{proposition}
\label{prop:clockregion}
For each state $\{q,r\}$ of the region automaton $\BFR$, clock region $r$ is always a tuple $(v(x_{1}),\ldots,v(x_{m}))$, where $v(x_{i})$ are integers for all $i=1,\ldots,m$.
\end{proposition}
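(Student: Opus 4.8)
The plan is to exploit the very special structure of the clocks in $\BFP$: every clock $x_i$ is reset to $0$ whenever robot $i$ completes a transition, and the clock constraint that triggers that transition is the \emph{equality-like} threshold $v(x_i) \geq w_i(a,b)$ with $w_i(a,b) \in \BBN$, enforced to fire \emph{immediately} upon satisfaction. Because all weights are integers and transitions are forced the instant a constraint becomes true, each clock $x_i$, viewed along any run, only ever takes integer values at the moments that matter — namely at the endpoints of the time-successor steps that appear as transitions of $\BFR$. So the real claim is that the reachable clock regions of $\BFP$ are exactly the integer lattice points (within the bounding box $[0,d_1]\times\cdots\times[0,d_m]$), never the open segments or open regions that a general timed automaton à la \cite{alur1994theory} would produce.

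I would argue by induction on the length of a run of $\BFR$ from its initial state. The base case is immediate from Def.~\ref{def:RA}(2): the initial region $r^0$ has $v(x_i)=0$ for all $i$, which is the integer tuple $(0,\ldots,0)$. For the inductive step, suppose the current state is $\{q,r\}$ with $r=(n_1,\ldots,n_m)$, all $n_i\in\BBN$. A transition of $\BFR$ to $\{q',r'\}$ requires a time-successor region $r''$ of $r$ satisfying some clock constraint $g$ and then resetting the appropriate clocks. Here is the key sub-argument: starting from an integer tuple and letting all clocks advance at rate one, the \emph{first} time any triggering constraint $v(x_i)\ge w_i$ (an integer threshold) can be met is after an integer amount of time $t=\min_i (w_i - n_i)$ — because each such constraint, by the immediacy convention, fires exactly when $v(x_i)$ hits the integer $w_i$, not before and not strictly after. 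Hence the only time-successor region that can appear as the source of a genuine (valid) transition is again an integer tuple $r''=(n_1+t,\ldots,n_m+t)$; the open-segment and open-region classes of \cite{alur1994theory} are never the triggering region, so the corresponding transitions of $\BFP$ are precisely the invalid transitions that Def.~\ref{def:RA} discards. Finally, resetting the clocks of the robots that complete their transitions sends their coordinates to $0$ and leaves the others at their (integer) values, so $r'$ is again an integer tuple. This closes the induction.

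The main obstacle is making the phrase ``the first time a constraint is satisfied is an integer'' fully rigorous in the region-automaton formalism, where one does not reason with concrete real times but with equivalence classes of valuations and the time-successor relation. I would handle this by noting that from an integer valuation the clock region immediately \emph{leaves} the integer point into an open region in which \emph{no} constraint boundary is crossed until some clock reaches the next integer, and that the enforced-transition semantics of $\BFP$ (a transition is taken the instant all its clock constraints hold) means a transition can only be enabled on a region where some clock attains exactly its integer threshold — never on the intermediate open region, because on that open region the triggering threshold has \emph{already} been passed for the clock with the smallest remaining gap, which is forbidden. Thus the only regions reachable as states of $\BFR$ are the integer tuples, and a brief appeal to the time-successor definition in Sec.~\ref{sec:sub:prelim-time-automata} plus the construction in Sec.~\ref{sec:sub:ta} completes the argument. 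Everything else (the box bound $v(x_i)\le d_i$, finiteness of $\CQ_R$) then follows as an easy corollary, which is presumably what the authors record next.
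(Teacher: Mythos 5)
Your proof is correct and follows essentially the same route as the paper: both rest on the facts that clock constraints are integer thresholds, transitions are forced the instant a constraint is satisfied, and clocks start at zero, so every reachable valuation is an integer tuple (corner point). You merely make explicit, via induction and the elapsed time $t=\min_i(w_i-n_i)$, what the paper's shorter argument states directly.
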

\begin{proof}
Clock constraints are positive integers smaller than or equal to $d_i$. Since the transitions are enforced when clock constraints are satisfied, and the initial clock is set to $0$, every time a transition on $P$ is taken, after the clock-resets, we have $v(x_{i})\in\{0,\ldots,d_{i}-1\}$, for all $i=1,\ldots,m$.  Therefore, the set of clock regions that can be reached on $\BFR$ (the bisimulation quotient of $\BFP$) are always corner points, \ie a tuple $(v(x_{1}),\ldots,v(x_{m}))$, where $v(x_{i})$ are integers for all $i=1,\ldots,m$.
\end{proof}
Using Prop. \ref{prop:clockregion}, we now assign a weight to each transition of $\BFR$.  Given a transition $(\{q,r\},\{q',r'\}$), we define its weight to be the time $t$ it takes to reach from $r=(v(x_{1}),\ldots,v(x_{m}))$ to $r''=(v(x_{1})+t,\ldots,v(x_{m})+t)$, where $r''$ is a time-successor of $r$. The region automaton corresponding to the product automaton from Fig. \ref{fig:TA} is shown in Fig. \ref{fig:region_automaton}.

\begin{figure}[h]
	\centering
	\includegraphics[width=0.9\linewidth]{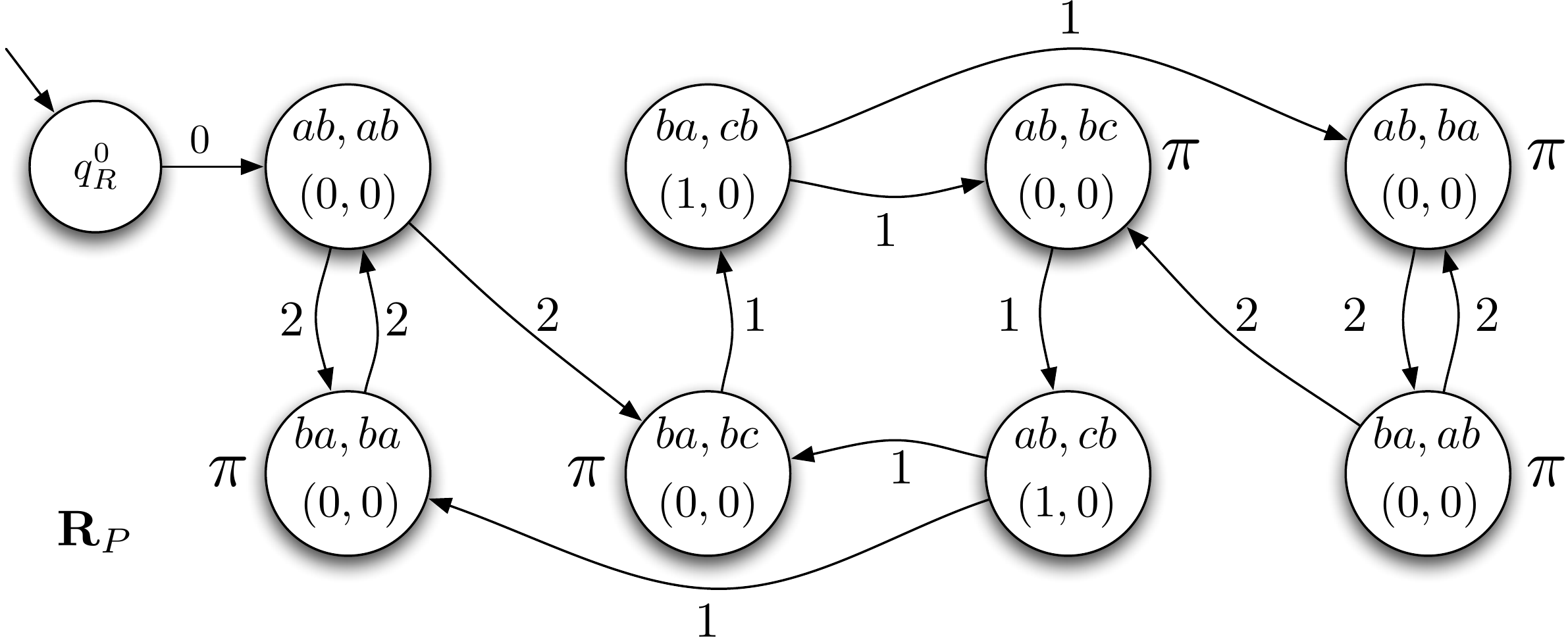}
	\caption{The finite state region automaton capturing the joint behavior of two robots in 9 states.  In the circle representing a state $\{q,r\}$, the first line is $q$ and the second line is $r$. }
	\label{fig:region_automaton}
\end{figure}

The following proposition gives the bound on the size of the region automaton $\BFR$.
\begin{proposition}
\label{thm:region_automaton}
The number of states $|\CQ_R|$ of $\BFR$ is bounded by
\begin{equation}
|\CQ_P| \left(\prod_{i=1}^{m} d_i - \prod_{i=1}^{m} (d_i - 1) \right)
\label{eq:ra_state_bound}
\end{equation}
\label{thm:ra}
\end{proposition}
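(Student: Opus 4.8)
The plan is to bound $|\CQ_R|$ by counting, for each state $q$ of the product timed automaton $\BFP$, how many clock regions $r$ can appear together with $q$ in a reachable state $\{q,r\}$ of $\BFR$. By Proposition~\ref{prop:clockregion}, every reachable clock region is a corner point $(v(x_1),\dots,v(x_m))$ with integer coordinates. The first step is to pin down exactly which integer tuples are admissible: by the argument in the proof of Proposition~\ref{prop:clockregion}, immediately after the clock-resets following any transition on $\BFP$, each clock satisfies $v(x_i)\in\{0,\dots,d_i-1\}$. So a crude first bound would be $|\CQ_P|\prod_{i=1}^m d_i$. The refinement that yields \eqref{eq:ra_state_bound} comes from the observation that we require at least one robot to complete a transition on every step of $\BFP$ (this is stated in Sec.~\ref{sec:sub:ta}); hence the robot that just transitioned has its clock reset to $0$, so at least one coordinate of $r$ equals $0$. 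This rules out all tuples in which every $v(x_i)\geq 1$.

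The counting step is then elementary: the number of integer tuples $(v(x_1),\dots,v(x_m))$ with $v(x_i)\in\{0,\dots,d_i-1\}$ for all $i$ is $\prod_{i=1}^m d_i$, and among these, the ones with $v(x_i)\geq 1$ for \emph{every} $i$ — i.e.\ $v(x_i)\in\{1,\dots,d_i-1\}$ — number $\prod_{i=1}^m(d_i-1)$. Subtracting, the number of admissible corner points that can be paired with a given $q\in\CQ_P$ is at most $\prod_{i=1}^m d_i-\prod_{i=1}^m(d_i-1)$. Multiplying by $|\CQ_P|$ gives the claimed bound \eqref{eq:ra_state_bound}. One should also note the initial state $\{q_A^0,r^0\}$ with $r^0$ all-zero is itself of this form (it has every coordinate $0$, so it lies in the counted set), so it does not need separate accounting.

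The one point that needs a little care — and which I expect is the main obstacle to making this fully rigorous — is justifying that \emph{every} reachable state of $\BFR$ indeed has at least one zero clock coordinate, rather than just the states reached immediately after a transition. Since $\BFR$ is the bisimulation quotient of $\BFP$ and its states are exactly the pairs $\{q,r\}$ where $r$ is the corner-point clock region reached right after a transition of $\BFP$ (the intermediate time-successor regions $r''$ are used only to define the transition relation, not to create new states), this holds: each state of $\BFR$ other than possibly the initial one is the destination of some transition on which some clock was reset, and the initial one has $r^0=\mathbf 0$. I would make this explicit by recalling from Def.~\ref{def:RA} that states of $\BFR$ are of the form $\{q,r\}$ and that, combined with the ``at least one robot transitions'' requirement and Prop.~\ref{prop:clockregion}, forces $v(x_i)=0$ for at least one $i$. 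With that established, the arithmetic of the previous paragraph completes the proof.
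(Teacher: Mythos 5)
Your proposal is correct and follows essentially the same route as the paper's own proof: invoke Prop.~\ref{prop:clockregion} to restrict to integer corner points with $v(x_i)\in\{0,\dots,d_i-1\}$, subtract the $\prod_{i=1}^m(d_i-1)$ tuples with all clocks nonzero (unreachable because some clock is reset at every transition), and multiply by $|\CQ_P|$. Your extra remark justifying that every state of $\BFR$, not just post-transition snapshots, has a zero coordinate simply makes explicit what the paper leaves implicit.
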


\begin{proof}
From Prop. \ref{prop:clockregion}, all clock regions of $\BFR$ are corner points, \ie tuples of integers taking values within the range $\{0,\ldots,d_i-1\}$. Counting the number of possible reachable clock regions, we have
\[
\prod_{i=1}^{m} d_i - \prod_{i=1}^{m} (d_i - 1)
\]
where $\prod_{i=1}^{m} d_i$ is the number of all possible corner points and $\prod_{i=1}^{m} (d_i - 1)$ is the number of corner points where all clocks are non-zero (since one clock must be zero after the reset, these corner points cannot be reached).  Given a product timed automaton with $|\CQ_P|$ number of states, using the above given bound on the number of reachable clock regions we can conclude that $|\CQ_R|$ is bounded by \eqref{eq:ra_state_bound}.
\end{proof}

\begin{remark}
In \cite{alur1994theory} the authors give the upper-bound on the number of clock regions $|\CR_P|$ of $\BFP$ as
\[
m! \cdot 2^{m} \cdot \prod_{i=1}^m(2 d_i +2),
\]
which gives the upper bound of $\BFR$ as $|\CQ_{P}|\cdot m! \cdot 2^{m} \cdot \prod_{i=1}^m(2 d_i +2)$.  From Prop. \ref{prop:clockregion}, using our particular case of timed automata, $|\CQ_R|$ is reduced by at least a factor of $m!\cdot2^{2m}$.
\end{remark}

We use Alg.~\ref{algo:constR} to obtain the region automaton $\BFR$, by applying a (recursive) depth-first search (DFS) on $\BFP$.  We note that line $8$ in Alg.~\ref{algo:constR} removes all invalid transitions in $\BFP$.  Moreover, Alg.~\ref{algo:constR} generates $\BFR$ by finding all reachable clock regions of $\BFP$.

\begin{algorithm}[h]
\SetInd{0.5em}{0.5em}
\KwIn{Product timed automaton $\BFP$.}
\KwOut{Corresponding region automaton $\BFR$.}
\Begin{
	Obtain $\BFR$ by running a DFS on $\BFP$ starting from the initial state and clock region $r^{0}=(0,\ldots,0)$:
	{\bf dfsP}($q_P^0$,$r^0$).
	\BlankLine
	
	\hrule	
	{\bf Function dfsP}(state $q$, clock region $r$)
	\hrule
	
	\Begin{
		Find the next clock region $r''$ when we have a transition out of $q$.\\
		$w \leftarrow$ Time between $r$ and $r''$.\\
		\ForEach {transition $t$ taken at $r''$} {
 			Find the next clock region $r'$ once $t$ is taken by resetting the appropriate clock.\\
 			$q' \leftarrow$ Target state of $t$.\\
			\If{$\{q',r'\} \notin \CQ_R$} {
				Add state $\{q',r'\}$ to $\CQ_R$ with proposition $\CL_P(t)$ of $t$.\\
				Add $\{q,r\} \rightarrow \{q',r'\}$ to $\to_R$ with $w$.\\
				Continue search from $\{q',r'\}$: {\bf dfsP}$(q',r')$\\
			}
			\ElseIf{$\{q,r\} \rightarrow \{q',r'\} \notin \to_R$} {
				Add $\{q,r\} \rightarrow \{q',r'\}$ to $\to_R$ with $w$.\\
			}
		} 
	}
}
\caption{\sc Obtain-Region-Automaton}\label{algo:constR}
\end{algorithm}

We now show that the region automaton indeed captures the behavior of the team.  Given a run $r_{R}$ on $\BFR$, we denote the corresponding word (see Sec. \ref{sec:sub:TSLTL}) as $\omega_{R}$ and the corresponding time sequence of satisfying instances of propositions (see Sec. \ref{sec:problem}) as $\mathbb T_{R}$.  We have
\begin{proposition}
\label{prop:correspondingruns}
Given individual runs of the team, $r_i = q_i^0q_i^1\ldots, i=1,\ldots,m$, there is a corresponding run $r_{R}$ on $\BFR$ such that, the word $\omega$ generated by the team is $\omega_{R}$ and the time sequence $\mathbb T$ of satisfying instances of propositions for the team is $\mathbb T_{R}$.
\end{proposition}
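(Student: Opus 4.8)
The plan is to turn the given collection of individual runs into a single timed execution of the product timed automaton $\BFP$, and then read off the corresponding run of $\BFR$ by recording, at each ``event'' (each instant at which some robot completes a transition), the state of $\BFP$ together with the integer tuple of clock values. First I would pass from each run $r_i = q_i^0 q_i^1\ldots$ of $\BFT_i$ to its image in the dual $\BFD_i$, namely the state sequence $q_{D_i}^0,\, q_i^0 q_i^1,\, q_i^1 q_i^2,\ldots$, in which the robot dwells in $q_i^k q_i^{k+1}$ for exactly $w_i(q_i^k,q_i^{k+1})$ time units (and $0$ units in $q_{D_i}^0$). Equivalently, this is the execution of $\BFA_i$ in which clock $x_i$ starts at $0$, increases at unit rate, and is reset to $0$ the instant it meets the constraint $v(x_i)\ge w_i(q_i^k,q_i^{k+1})$, whereupon $\BFA_i$ takes the transition to the next dual state.

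Next I would form the joint execution of $\BFP$: all clocks $x_1,\ldots,x_m$ start at $0$ and advance together, and whenever the clock constraint of some robot's current state is met, \emph{all} robots whose constraints are simultaneously met take their transitions and reset their clocks while the remaining robots stay put (this is precisely the transition rule of $\BFP$, and it is the one enforced by the ``transition-when-enabled'' semantics, so no invalid transition is ever used). Because every weight $w_i$ is an integer and every reset returns a clock to $0$, a straightforward induction — the content of Prop.~\ref{prop:clockregion} — shows that at every event the clock valuation is an integer tuple, i.e., a corner-point clock region of $\BFR$. Let $p^0 p^1\ldots$ be the resulting state sequence of $\BFP$ and $r^0 r^1\ldots$ the clock valuations immediately after the corresponding resets; I claim $r_R := \{p^0,r^0\}\{p^1,r^1\}\ldots$ is a run of $\BFR$. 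This is verified directly against Def.~\ref{def:RA}: the step from $\{p^k,r^k\}$ to $\{p^{k+1},r^{k+1}\}$ is witnessed by the $\BFP$-transition $p^k\to p^{k+1}$, by the time-successor $r''$ of $r^k$ at which the enabling constraint first becomes true (this $r''$ is exactly the clock valuation just before the event), and by the reset taking $r''$ to $r^{k+1}$ — which is how the joint execution was constructed.

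It then remains to match labels and times. For the word: $\CL_R$ assigns to each state of $\BFR$ the proposition set labelling the $\BFP$-transition that enters it; by construction of $\BFP$ this label is the union, over the robots that just transitioned, of $\CL_{D_i}$ of their dual transition, and $\CL_{D_i}\bigl((q_i^k q_i^{k+1},\,q_i^{k+1} q_i^{k+2})\bigr) = \CL_i(q_i^{k+1})$ is exactly the set of propositions robot $i$ satisfies on arriving at $q_i^{k+1}$. Hence the label at event $k$ is the union of all propositions satisfied by the team at that instant, which by definition is $\omega^k$, so $\omega_R = \omega$. For the times: the weight attached to a transition $(\{q,r\},\{q',r'\})$ of $\BFR$ is the elapsed time between $r$ and its time-successor $r''$, i.e., the dwell length before the event; summing these weights up to event $k$ gives the absolute time of that event, and since the events of the joint execution are, by construction, exactly the ordered union $\bigcup_{i=1}^m \BBT_i$ of the individual proposition-satisfaction times, the induced time sequence is $\mathbb{T}_R = \mathbb{T}$ (and in particular $\mathbb{T}_R^{\opt} = \mathbb{T}^{\opt}$).

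The step I expect to be the main obstacle is the second one: pinning down the joint execution of $\BFP$ so that it is manifestly a legitimate run — correctly handling robots that do not move at a given event (the $q_i = q_i'$ case), several robots transitioning at the same instant collapsing to a single $\BFP$-transition, and the fact that ``enforce a transition as soon as a constraint holds'' is exactly what makes the visited clock regions corner points and what excludes the invalid transitions — and then checking each step of the resulting sequence against the somewhat intricate time-successor/reset conditions of Def.~\ref{def:RA}. Once that is in place, the label- and time-matching are routine bookkeeping.
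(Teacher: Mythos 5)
Your proposal is correct and follows essentially the same route as the paper: lift each $r_i$ to its dual $\BFD_i$/timed automaton $\BFA_i$ execution, assemble the joint execution of $\BFP$ under the transition-when-enabled semantics, pass to a run of $\BFR$, and then match the word via the proposition labels carried to destination states and the times via the zero-dwell-at-vertices bookkeeping. The only presentational difference is that you verify the $\BFP$-to-$\BFR$ step directly against Def.~\ref{def:RA} and Prop.~\ref{prop:clockregion}, whereas the paper simply invokes the fact that $\BFR$ is a bisimulation quotient of $\BFP$; this is a more explicit rendering of the same argument, not a different one.
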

\begin{proof}
Each run $r_{i}=q_i^0q_i^1\ldots$ uniquely corresponds to a run on $\BFD_{i}$, $r_{D_{i}}=q^{0}_{D_{i}}(q_i^0q_i^1)(q_{i}^{1}q_i^2),\ldots$.  Since the weight $w_{T_{i}}(q_{i}^{k},q_{i}^{k+1})$ is defined to be the clock constraint associated with state $q_{i}^{k}q_{i}^{k+1}$ on $\BFA_{i}$, there is a sequence of transitions $r_{P}$ on the product timed automaton $\BFP$ such that a transition occurs if some set of states are visited on $\BFT_{i}$'s. Since $\BFR$ is a bisimulation quotient of $\BFP$, this sequence of transitions corresponds to a run on $r_{R}=q^{0}_{R}\{q^{0},r^{0}\}\{q^{1},r^{1}\}\ldots$, such that each transition $(\{q^{k},r^{k}\},\{q^{k+1},r^{k+1}\})$ in $r_{R}$ corresponds to some set of states being visited on $\BFT_{i}$'s, which we denote as $I(\{q^{k},r^{k}\},\{q^{k+1},r^{k+1}\})$.  Similarly, if some set of states are visited on $\BFT_{i}$'s, there is a corresponding transition $(\{q^{k},r^{k}\},\{q^{k+1},r^{k+1}\})$ for some $k$.  The set of propositions satisfied at the set of states $I(\{q^{k},r^{k}\},\{q^{k+1},r^{k+1}\})$ is satisfied when the transition $(\{q^{k},r^{k}\},\{q^{k+1},r^{k+1}\})$ is taken and the state $\{q^{k+1},r^{k+1}\}$ is reached on $\BFR$.  Therefore the word $\omega$ generated by $\BFR$ is exactly the word generated by the team.   Also note that the state $\{q^{k+1},r^{k+1}\}$ corresponds to robots leaving vertices $I(\{q^{k},r^{k}\},\{q^{k+1},r^{k+1}\})$.  Because robots spend zero time at vertices, $\{q^{k+1},r^{k+1}\}$ is reached at the same time as when robots reach $I(\{q^{k},r^{k}\},\{q^{k+1},r^{k+1}\})$.  Therefore, the time sequence $\mathbb T$ of satisfying instances of propositions for the team is exactly $\mathbb T_{R}$ for run $r_{R}$.
\end{proof}

\subsection{Generating the optimal runs for individual robots}
\label{sec:sub:run}
Once the region automaton capturing the behavior of the team is constructed, we can use Alg. \optrun \cite{SLS-JT-CB-DR:10b} to obtain an optimal run $r^{\star}_{R}$ on $\BFR$ that minimizes the $\limsup$ as defined in \eqref{eq:cost_function}.  The optimal run $r^{\star}_{R}$ always consists of a finite sequence of states of $\BFR$ (prefix), followed by infinite repetitions of another finite sequence of states of $\BFR$ (suffix).  Such a run is said to be in a prefix-suffix form.

For the example we have shown throughout this section, running Alg. \optrun \cite{SLS-JT-CB-DR:10b} on $\BFR$ given in Fig.~\ref{fig:region_automaton} for the formula $\phi := \Always \Event \pi$ results in the optimal run
\begin{center}
\scalebox{0.83}{%
\begin{tabular}{ c |c c c c c c c c }
$\BBT$& 0 & 2 & 3 & 4 & 6& 8 & 10 & \ldots \\
\hline
\multirow{2}{*}{$r^{\star}_{R}$} & ab,ab& ba,bc& ba,cb& ab,ba& ba,ab& ab,ba& ba,ab & \multirow{2}{*}{$\ldots$}\\
& (0,0) & (0,0) & (1,0) & (0,0) & (0,0) & (0,0) &  (0,0) &\\
\hline
$\CL_\BFR(\cdot)$ &   & $\opt$ &   & $\opt$ & $\opt$ & $\opt$ & $\opt$ & \ldots \\
\end{tabular}}
\end{center}
where the first row corresponds to the times when transitions occur, the second row comprises the run $r_{R}^\star$, and the last row shows the satisfying atomic propositions.  For this run, we see that $\{(ab,ab),(0,0)\}\{(ba,bc),(0,0)\}\{(ba,cb),(1,0)\}$ is the prefix and $\{(ab,ba),(0,0)\}\{(ba,ab),(0,0)\}$ is the suffix and will be repeated infinite number of times.
Moreover, for this example, the time sequence of satisfaction of $\pi$ is $\BBT^\pi = 2,4,6,8,10,\ldots $ and the cost as defined in \eqref{eq:cost_function} is $J(\BBT^\opt) = 2$.

Given a run $r_{R}$ of $\BFR$, we can finally project it down to individual robots to obtain individual runs $r_{i}$ of $\BFT_{i}$.

\begin{definition}[\bf Projection of a run on $\BFR$ to $\BFT_{i}$'s]
Given a run $r_{R}$ on $\BFR$ where 
\begin{multline*}
r_{R}=\left\{(q^{0}_{1}q^{1}_{1},\ldots,q^{0}_{m}q^{1}_{m}),(v^{0}(x_{1}),\ldots,v^{0}(x_{m}))\right\}\\\left\{(q^{1}_{1}q^{2}_{1},\ldots,q^{1}_{m}q^{2}_{m}),(v^{1}(x_{1}),\ldots,v^{1}(x_{m}))\right\}\ldots,
\end{multline*}
we define its projection on $\BFT_{i}$ as run $r_{i}=q^{0}_{i}q^{1}_{i}\ldots$ for all $i=1,\ldots,m$, where $q^{k}_{i}$ only appears in $r_{i}$ if $v^{k}(x_{i})=0$.
\label{def:map}
\end{definition}
It can be easily seen that, given $r_{R}$, its set of projected runs $r_{i}$ correspond to $r_{R}$ as defined in Prop. \ref{prop:correspondingruns}, \ie the behavior of the team where robot $i$ follows run $r_{i}$ is captured exactly by $r_{R}$.  Moreover, if run $r_{R}$ is in prefix-suffix form, all individual runs $r_{i}$ projected from $r_{R}$ are in prefix-suffix form.  Therefore, the individual runs projected from the optimal run $r^{\star}_{R}$ are always in prefix-suffix form.  For the optimal run we obtained for the previous example, using Def.~\ref{def:map}, we have runs of individual robots as follows:
\begin{center}
\scalebox{0.83}{%
\begin{tabular}{ c |c c c c c c c c }
$\BBT$& 0 & 2 & 3 & 4 & 6& 8 & 10 & \ldots \\
\hline
$r_1^\star$ & a & b &   & a & b & a & b & \ldots\\
\hline
$r_2^\star$ & a & b & c & b & a & b & a & \ldots
\end{tabular}}
\end{center}
Note that, at time $t=3$, the second robot has arrived at $c$ while the first robot is still traveling from $b$ to $a$, therefore the clock of the first robot is not zero at this time, \ie $v^{3}(x_{1})\neq 0$, and $b$ does not appear in $r_{1}^\star$ at time $t=3$.

We finally summarize our approach in Alg.~\ref{algo:multioptrun} and show that this algorithm indeed gives a solution to Prob. \ref{prob:problem}.
\begin{proposition}
Alg.~\ref{algo:multioptrun} solves Prob.~\ref{prob:problem}.
\end{proposition}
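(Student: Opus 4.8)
The plan is to prove the two halves of ``solves Prob.~\ref{prob:problem}'' separately: feasibility (the word generated by the team satisfies $\phi$) and optimality (no feasible team behavior achieves a smaller value of $J$ in \eqref{eq:cost_function}). Both follow once the per-stage correspondences of Sections~\ref{sec:sub:dts}--\ref{sec:sub:ra} are assembled into a single cost-preserving bijection between feasible team behaviors and feasible runs on $\BFR$.

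First I would chain the constructions. An individual run $r_i$ of $\BFT_i$ corresponds uniquely to a run on the dual $\BFD_i$, which---since the weights of $\BFT_i$ become the clock constraints of $\BFA_i$ and transitions are enforced as soon as the constraints are met---corresponds to a run of the timed automaton $\BFA_i$; the parallel composition bundles these into a run of $\BFP$; and because $\BFR$ is a bisimulation quotient of $\BFP$ with all invalid transitions removed, runs of $\BFP$ and runs of $\BFR$ are in exact correspondence. Proposition~\ref{prop:correspondingruns} already states that this correspondence preserves both the team word $\omega$ and the team time sequence $\BBT$; in particular it preserves the subsequence $\BBT^{\opt}$ and hence the value of the $\limsup$ in \eqref{eq:cost_function}. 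Definition~\ref{def:map} supplies the reverse map (projection), and the remark following it certifies that projecting any run $r_R$ of $\BFR$ yields individual runs whose team behavior is exactly the one $r_R$ encodes, so the correspondence is genuinely two-sided. Consequently a team behavior generates a word satisfying $\phi$ if and only if the matched run on $\BFR$ does, and the two have the same cost.

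Next I would invoke the correctness of \optrun from \cite{SLS-JT-CB-DR:10b}: applied to the weighted transition system $\BFR$ (with the propositions and transition weights assigned in Section~\ref{sec:sub:ra}) and the formula $\phi = \varphi \wedge \Always\Event\opt$, it returns a run $r^{\star}_R$ in prefix--suffix form whose generated word satisfies $\phi$ and which minimizes $J$ over all runs of $\BFR$ whose word satisfies $\phi$. Projecting $r^{\star}_R$ through Definition~\ref{def:map} gives runs $r_i$ for the robots; by the correspondence of the previous step the team following these runs generates the word and time sequence of $r^{\star}_R$, so its word satisfies $\phi$ (feasibility) and its cost equals $J$ evaluated on $r^{\star}_R$. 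Conversely, any feasible team behavior matches some feasible run of $\BFR$ with equal cost, which the optimum $J(r^{\star}_R)$ does not exceed; hence the projected runs are optimal (optimality). This proves the proposition.

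The step I expect to be the main obstacle is making the bijection airtight in both directions, rather than only the ``easy'' direction from the team to $\BFR$. Concretely one must check: (i) every realizable team behavior is captured---this is Proposition~\ref{prop:correspondingruns}; (ii) no run of $\BFR$ is spurious, i.e., each projects to a valid team behavior---this needs the removal of invalid transitions during the construction of $\BFR$ together with Definition~\ref{def:map} and its following remark; and (iii) the transition weights assigned to $\BFR$ make the numerical cost on $\BFR$ coincide with the real elapsed-time cost for the team, which holds because a state $\{q^{k+1},r^{k+1}\}$ is reached exactly when the robots reach the vertices $I(\{q^{k},r^{k}\},\{q^{k+1},r^{k+1}\})$ and robots spend zero time at vertices. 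Once these three points are secured, the remainder is bookkeeping on top of the optimality guarantee of \optrun.
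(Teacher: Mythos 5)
Your proposal is correct and follows essentially the same route as the paper: it invokes the correctness of \optrun on $\BFR$ from \cite{SLS-JT-CB-DR:10b} and the cost-preserving one-to-one correspondence between team runs and runs of $\BFR$ (Prop.~\ref{prop:correspondingruns} together with the projection of Def.~\ref{def:map}) to conclude that the projected runs solve Prob.~\ref{prob:problem}. You merely spell out more explicitly than the paper the two-sidedness of the correspondence and the preservation of $\BBT^{\opt}$ and hence of $J$, which the paper's proof asserts more tersely.
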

\begin{proof}
Note that Alg.~\ref{algo:multioptrun} combines all steps outlined in this section.  Run $r^{\star}_{R}$ obtained from Alg. \optrun both satisfies $\phi$ and minimizes \eqref{eq:cost_function} among all runs of $\BFR$, which was shown in \cite{SLS-JT-CB-DR:10b}.   As shown in Prop. \ref{prop:correspondingruns} and as mentioned above, there is a one-to-one correspondence between a set of runs $\{r_1,\ldots, r_{m}\}$ and a run $r_{R}$.  Therefore, $\{r^{\star}_1,\ldots, r^{\star}_{m}\}$ as a projection of $r^{\star}_{R}$ onto $\BFT_{i}$'s is the solution to Prob.~\ref{prob:problem}.
\end{proof}

\begin{algorithm} 
\SetInd{0.5em}{0.5em}
\KwIn{$m$ $\BFT_i$'s and a LTL specification $\phi$ of form \eqref{eq:general_formula}.}
\KwOut{A set of runs $\{r^{\star}_1,\ldots, r^{\star}_{m}\}$ that both satisfies $\phi$ and minimizes \eqref{eq:cost_function}.}
\Begin{
 	\ForAll {$\BFT_i$} {
		Construct the timed automaton $A_i$ by first constructing the dual TS $D_i$ and then defining clocks and clock constraints.\\
	}
	Find the product timed automaton $\BFP = \Pi_{i=1}^m A_i$.\\
	Construct the region automaton $\BFR$ using \constR.\\
	Find the optimal run $r^{\star}_{R}$ using \optrun\cite{SLS-JT-CB-DR:10b}.\\
	Project $r^{\star}_{R}$ to $T_{i}$'s to obtain runs $\{r^{\star}_1,\ldots, r^{\star}_{m}\}$.\\
}
\caption{\multioptrun \label{algo:multioptrun}}
\end{algorithm}

\section{Implementation and Case Studies \label{sec:simulations}}

We implemented Alg. \ref{algo:multioptrun} in objective-C as the software package {\sc LTL Optimal Multi-robot Planner (LOMP)} and used it in conjunction with our earlier \optrun \cite{SLS-JT-CB-DR:10b} algorithm to obtain simulations of robots performing persistent data gathering missions in a road network environment. Our user-friendly software package is available at \url{http://hyness.bu.edu/Software.html}.  It utilizes the dot tool \cite{dotTool} to visualize transition systems and the \optrun algorithm uses the LTL2BA software \cite{LTL2BA} to convert LTL specifications to B\"uchi automata. A typical usage of our software comprises three steps: 
\begin{enumerate}
\item The user defines $\BFT_i$'s in text and imports them to the application. Then, the application creates the region automaton $\BFR$ following the steps detailed in Sec.\ref{sec:solution} and exports an M-file which defines $\BFR$ in Matlab.
\item \optrun algorithm is executed in Matlab to find the optimal run $r^{\star}_R$ on $\BFR$, which is projected onto $\BFT_i,i=1,\ldots,m$ to obtain the solution to Prob.~\ref{prob:problem}.
\item Finally, the resulting motion of the team is shown in a simulator.
\end{enumerate}

\begin{figure}
  \centering
  \includegraphics[width=.8\linewidth]{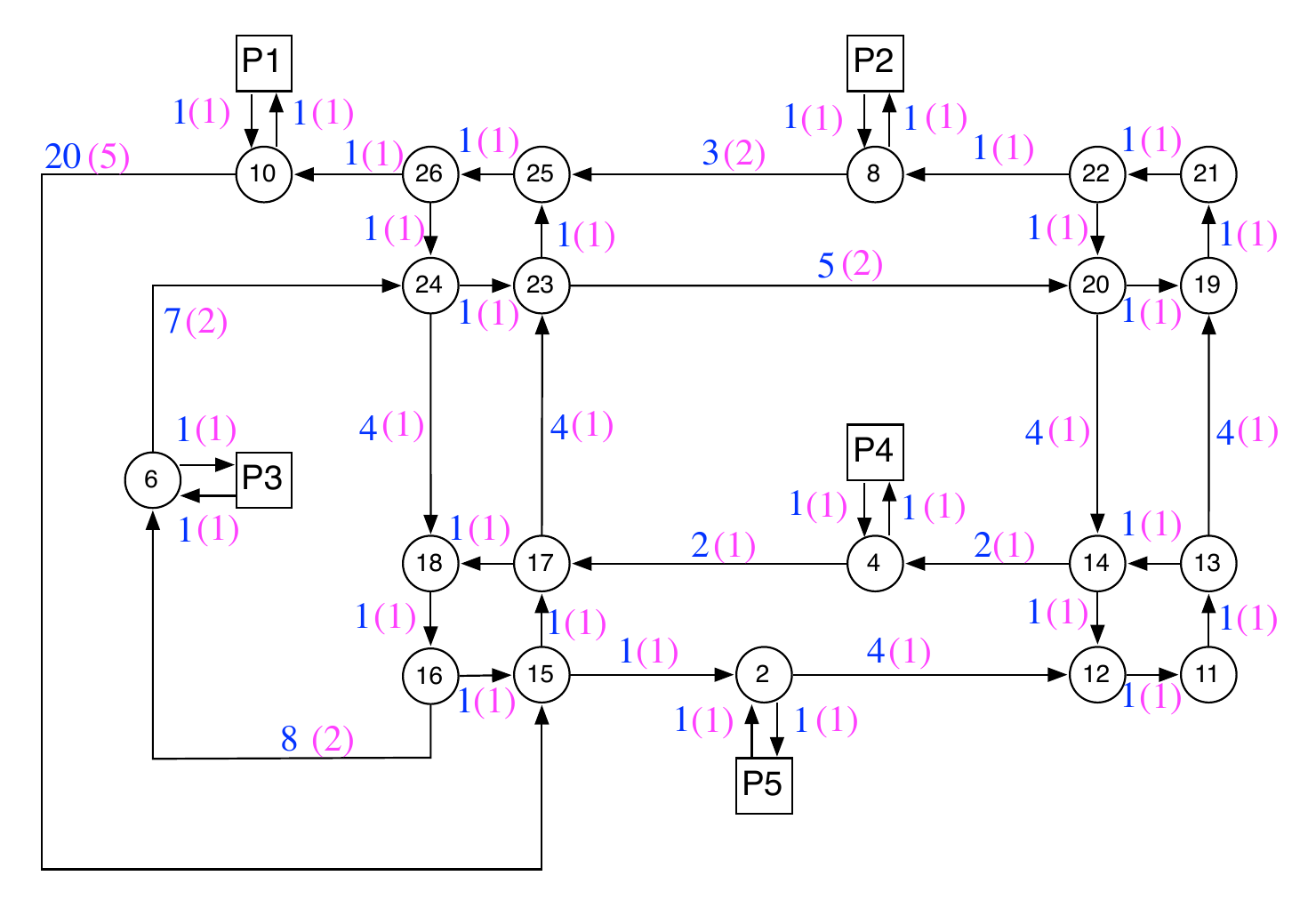}
  \caption{The road network showing the labels of task locations and the quantized weights of the road segments for the two case studies. Values in blue are weights for the case where the weights are in $\{1\ldots20\}$ and values in magenta are for the case where the weights are in $\{1\ldots5\}$.}
  \label{fig:rule_ts}
\end{figure}

The road network that we consider for our case studies is a collection of roads, intersections, and task locations. In this road network, a road connects two intersections and the task locations are always located on the side of a road. The  transition system that we used to model the motion of the robots in this environment is illustrated in Fig. \ref{fig:rule_ts}. We assume that the transition systems $\BFT_{i}$ of robots are identical except at the initial state.  In $\BFT_{i}$'s, the weights of transitions are quantized so that the resulting region transition system has a manageable size while still preserving the relative distances of the road segments. In the following, we consider two cases where the weights fall in the range $\{1,\ldots,5\}$ and $\{1,\ldots,20\}$, respectively.

We consider a persistent monitoring task where robots are deployed to repeatedly gather and upload data.   We require robot 1 to gather data at P1 and upload the gathered data at P5; and robot 2 to gather data at P2 and upload the gathered data at P4.  To specify this task, we let the set of atomic propositions to be 
\begin{equation*}
\Pi=\{\mathtt{Gather}, \mathtt{R1Gather}, \mathtt{R1Upload}, \mathtt{R2Gather}, \mathtt{R2Upload}\}
\end{equation*}
and assign the atomic propositions as follows: 
\begin{align*}
\mathcal L_{1}(P_{1})&=\{\mathtt{R1Gather}, \mathtt{Gather}\}, \CL_1(P5)=\{\mathtt{R1Upload}\}\\
\mathcal L_{2}(P_{2})&=\{\mathtt{R2Gather}, \mathtt{Gather}\}, \CL_2(P4)=\{\mathtt{R2Upload}\}.
\end{align*}
 We aim to minimize the maximum time in between data-gatherings performed by either robot 1 or 2.  Therefore we set the proposition $\mathtt{Gather}$ to be satisfied when either robots visit their gathering locations, and we set it as the optimizing proposition ($\pi$ as in formula \eqref{eq:general_formula}).  We set the propositions $\{\mathtt{R1Gather}, \mathtt{R1Upload}\}$ and $\{\mathtt{R2Gather}, \mathtt{R2Upload}\}$ to be robot specific since robots gather and upload at different locations.  For both robots, we enforce the rule that, after each data gathering, the data must be uploaded at the upload location before another data gathering. This rule can be specified in LTL as follows:
\begin{multline*}
\varphi = \Always(\mathtt{R1Gather} \Implies \Next(\Not \mathtt{R1Gather}\ \Until\ \mathtt{R1Upload})) \\
\andltl \Always(\mathtt{R2Gather} \Implies \Next(\Not \mathtt{R2Gather}\ \Until\ \mathtt{R2Upload})).
\end{multline*}
Our overall LTL formula in the form of \eqref{eq:general_formula} is  
$\phi=\varphi \land \Always\,\Event\, \mathtt{Gather}$.

\begin{figure}
  \centering
  \subfloat[][]{\includegraphics[width=0.7\linewidth,angle=270]{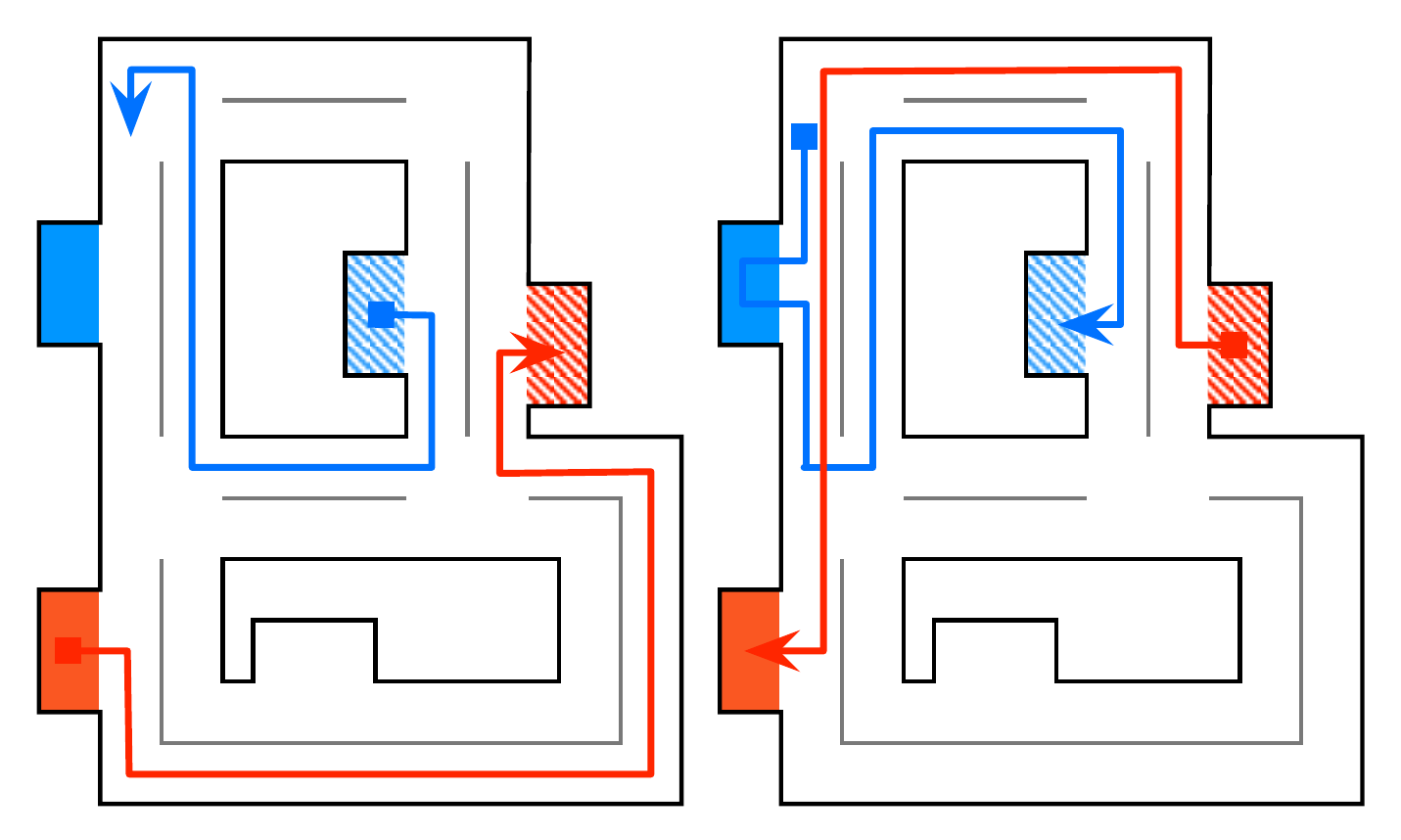}\label{fig:rule_map_run_1}}\hspace{0.3in}
  \subfloat[][]{\includegraphics[width=0.7\linewidth,angle=270]{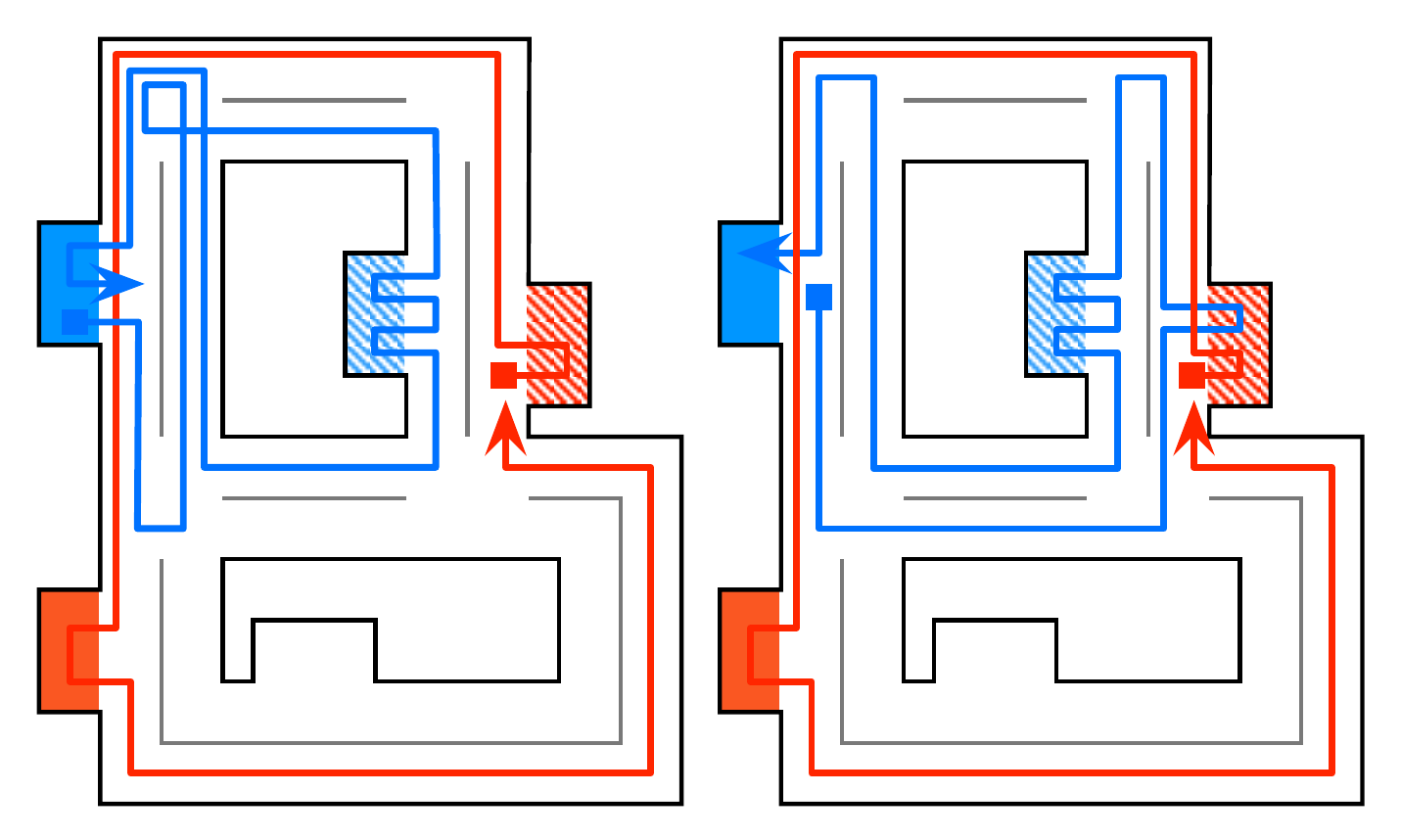}\label{fig:rule_map_run_1}}
  \caption{Simulated team trajectories for the two case studies. (a) and (b) correspond to the cases where the weights are within the ranges $\{1\ldots5\}$ and $\{1\ldots20\}$, respectively. Robot 1 and robot 2 travel between red and blue task locations respectively. Regions filled with a solid color are data gathering locations and regions with a diagonal pattern are upload locations.}
  \label{fig:simulations}
\end{figure}

Running our algorithms on an iMac i5 quad-core computer, we obtain the solutions as illustrated in Fig. \ref{fig:simulations}. For the case where the weights are in the range $\{1\ldots5\}$ the algorithm ran for 90 seconds, the region transition system $\BFR$ that the \optrun algorithm worked on had 2337 states and the value of the cost function was 11 time units, meaning that the maximum time in between data gatherings was 11 time units. For the case where the weights are in the range $\{1\ldots20\}$ our algorithm ran for 10 minutes, $\BFR$ had 6191 states and the value of the cost function was 22 time units. Our video submission accompanying the paper displays the robot trajectories for both cases.

It is interesting to note that, for the case where the weights are in $\{1\ldots20\}$, the optimal team trajectories have robots spending extra time entering and exiting some vertices. This behavior is actually time-wise optimal since it decreases the maximum time between satisfying instances of the optimizing proposition, minimizing the cost function.

\section{Conclusions \label{sec:conclusions}}

In this paper we presented a method for planning the optimal motion for a team of robots in a common environment subject to temporal logic constraints. The problem is important in applications where multiple robots have to perform a sequence of operations collectively subject to various external constraints. We considered temporal logic specifications which contain an optimizing proposition that must be repeatedly satisfied. The motion plan that our method provides is optimal in the sense that it minimizes the maximum time between satisfying instances of the optimizing proposition.

There are many promising directions for future work. In particular, we are looking at the case where one allows delays when robots take transitions.  We are also investigating more realistic robot models such as Markov Decision Processes (MDPs) and Partially Observable MDPs.

\bibliographystyle{IEEEtran} 
\bibliography{references}

\end{document}